\numberwithin{equation}{section}
\newtheorem{theorem}{Theorem}
\newtheorem{lemma}[theorem]{Lemma}
\DeclareMathOperator*{\argmin}{arg\,min}
\DeclareMathOperator*{\argmax}{arg\,max}
\author{Nicolas Le Roux\\Criteo Research\\\texttt{nicolas@le-roux.name}}
\date{\today}
\begin{document}

\title{Tighter bounds lead to improved classifiers}
\maketitle

\begin{abstract}
The standard approach to supervised classification involves the minimization of a log-loss as an upper bound to the classification error. While this is a tight bound early on in the optimization, it overemphasizes the influence of incorrectly classified examples far from the decision boundary. Updating the upper bound during the optimization leads to improved classification rates while transforming the learning into a sequence of minimization problems. In addition, in the context where the classifier is part of a larger system, this modification makes it possible to link the performance of the classifier to that of the whole system, allowing the seamless introduction of external constraints.
\end{abstract}

\section{Introduction}
Classification aims at mapping inputs $X \in \mathcal{X}$ to one or several classes $y \in \mathcal{Y}$. For instance, in object categorization, $\mathcal{X}$ will be the set of images depicting an object, usually represented by the RGB values of each of their pixels, and $\mathcal{Y}$ will be a set of object classes, such as ``car'' or ``dog''.

We shall assume we are given a training set comprised of $N$ independent and identically distributed labeled pairs $(X_i, y_i)$. The standard approach to solve the problem is to define a parameterized class of functions $p(y | X, \theta)$ indexed by $\theta$ and to find the parameter $\theta^*$ which minimizes the log-loss, i.e.
\begin{align}
\label{eq:log_loss}
\theta^*&= \argmin_{\theta} -\frac{1}{N}\sum_i \log p(y_i | X_i, \theta)\\
        &= \argmin_{\theta} L_{\log}(\theta) \; , \nonumber
\end{align}
with
\begin{align}
L_{\log}(\theta)    &= -\frac{1}{N}\sum_i \log p(y_i | X_i, \theta) \; .
\end{align}

One justification for minimizing $L_{\log}(\theta)$ is that $\theta^*$ is the maximum likelihood estimator, i.e. the parameter which maximizes
\begin{align*}
\theta^*    &= \argmax_{\theta}p(\mathcal{D} | \theta)\\
            &= \argmax_{\theta}\prod_i p(y_i | X_i, \theta) \; .
\end{align*}

There is another reason to use Eq.~\ref{eq:log_loss}. Indeed, the goal we are interested in is minimizing the classification error. If we assume that our classifiers are stochastic and outputs a class according to $p(y_i | X_i, \theta)$, then the expected classification error is the probability of choosing the incorrect class\footnote{In practice, we choose the class deterministically and output $\argmax_y p(y | X_i, \theta)$.}. This translates to
\begin{align}
L(\theta)   &= \frac{1}{N}\sum_i (1 - p(y_i | X_i, \theta))\nonumber\\
            &= 1 - \frac{1}{N}\sum_i p(y_i | X_i, \theta) \; .
\label{eq:g_supervised_prob}
\end{align}

This is a highly nonconvex function of $\theta$, which makes its minimization difficult. However, we have
\begin{align*}
L(\theta)   &= 1 - \frac{1}{N}\sum_i p(y_i | X_i, \theta) \\
            &\leq 1 - \frac{1}{N}\sum_i \frac{1}{K}\left(1 + \log p(y_i | X_i, \theta) + \log K\right)\\
            &= \frac{(K - 1 - \log K)}{K} + \frac{L_{\log}(\theta)}{K} \; ,
\end{align*}
where $K = |\mathcal{Y}|$ is the number of classes (assumed finite), using the fact that, for every nonnegative $t$, we have $t \geq 1 + \log t$. Thus, minimizing $L_{\log}(\theta)$ is equivalent to minimizing an upper bound of $L(\theta)$. 
Further, this bound is tight when $p(y_i | X_i , \theta) = \frac{1}{K}$ for all $y_i$. As a model with randomly initialized parameters will assign probabilities close to $1/K$ to each class, it makes sense to minimize $L_{\log}(\theta)$ rather than $L(\theta)$ early on in the optimization.

However, this bound becomes looser as $\theta$ moves away from its initial value. In particular, poorly classified examples, for which $p(y_i | X_i, \theta)$ is close to 0, have a strong influence on the gradient of $L_{\log} (\theta)$ despite having very little influence on the gradient of $L(\theta)$. The model will thus waste capacity trying to bring these examples closer to the decision boundary rather than correctly classifying those already close to the boundary. This will be especially noticeable when the model has limited capacity, i.e. in the underfitting setting.

Section~\ref{sec:tighter_bounds} proposes a tighter bound of the classification error as well as an iterative scheme to easily optimize it. Section~\ref{sec:experiments} experiments this iterative scheme using generalized linear models over a variety of datasets to estimate its impact. Section~\ref{sec:rl} then proposes a link between supervised learning and reinforcement learning, revisiting common techniques in a new light. Finally, Section~\ref{sec:conclusion} concludes and proposes future directions.

\section{Tighter bounds on the classification error}
\label{sec:tighter_bounds}
We now present a general class of upper bounds of the classification error which will prove useful when the model is far from its initialization.
\begin{lemma}
\label{lemma:general_lower_bound}
Let
\begin{align}
p_{\nu}(y | X, \theta) &= p(y | X, \nu) \left(1 + \log \frac{p(y | X, \theta)}{p(y | X, \nu)} \right)
\label{eq:p_nu}
\end{align}
with $\nu$ any value of the parameters. Then we have
\begin{align}
p_{\nu}(y | X, \theta) &\leq p(y | X, \theta) \; .
\end{align}
Further, if $\nu = \theta$, we have
\begin{align}
p_{\theta}(y | X, \theta) &= p(y | X, \theta) \; ,\\
\frac{\partial p_{\nu}(y | X, \theta)}{\partial \theta}\bigg|_{\nu = \theta} &= \frac{\partial p(y | X, \theta)}{\partial \theta} \; .
\end{align}
\end{lemma}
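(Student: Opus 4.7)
The plan is to establish the three claims in sequence, leaning on the elementary inequality $1 + \log t \le t$ for $t > 0$ that the author already invoked earlier in the introduction.

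First, for the inequality $p_{\nu}(y \mid X, \theta) \le p(y \mid X, \theta)$, I would apply that log inequality with $t = p(y \mid X, \theta)/p(y \mid X, \nu)$, which is legal provided $p(y \mid X, \nu) > 0$ (a mild regularity condition one should briefly flag; if $p(y \mid X, \nu) = 0$ the definition in Eq.~\ref{eq:p_nu} needs to be read as zero by convention). This gives
\begin{equation*}
1 + \log \frac{p(y \mid X, \theta)}{p(y \mid X, \nu)} \le \frac{p(y \mid X, \theta)}{p(y \mid X, \nu)},
\end{equation*}
and multiplying both sides by the nonnegative quantity $p(y \mid X, \nu)$ yields exactly the claim.

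Second, the identity at $\nu = \theta$ is immediate: the ratio inside the log is $1$, the log vanishes, and the bracket collapses to $1$, leaving $p_\theta(y \mid X, \theta) = p(y \mid X, \theta)$.

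Third, for the gradient identity, I would differentiate Eq.~\ref{eq:p_nu} with respect to $\theta$ while holding $\nu$ fixed. Only the $\log p(y \mid X, \theta)$ term depends on $\theta$, so the chain rule gives
\begin{equation*}
\frac{\partial p_{\nu}(y \mid X, \theta)}{\partial \theta} = p(y \mid X, \nu) \cdot \frac{1}{p(y \mid X, \theta)} \cdot \frac{\partial p(y \mid X, \theta)}{\partial \theta} = \frac{p(y \mid X, \nu)}{p(y \mid X, \theta)} \cdot \frac{\partial p(y \mid X, \theta)}{\partial \theta}.
\end{equation*}
Evaluating at $\nu = \theta$ collapses the prefactor to $1$ and recovers $\partial p / \partial \theta$. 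No step is a real obstacle; the only subtlety worth guarding against is the zero-probability edge case, which I would dispatch with a one-line convention so that the inequality and the derivative formula remain well defined everywhere.
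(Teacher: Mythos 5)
Your proposal is correct and follows essentially the same route as the paper: apply $t \ge 1 + \log t$ to the ratio $p(y \mid X, \theta)/p(y \mid X, \nu)$, observe the identity at $\nu = \theta$ by direct substitution, and differentiate the $\log$ term to get the prefactor $p(y \mid X, \nu)/p(y \mid X, \theta)$ that collapses to $1$ at $\nu = \theta$. Your brief remark on the zero-probability edge case is a reasonable addition the paper leaves implicit.
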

\begin{proof}
\begin{align*}
p(y | X, \theta) &= p(y | X, \nu)\frac{p(y | X, \theta)}{p(y | X, \nu)}\\
								&\geq p(y | X, \nu) \left(1 + \log \frac{p(y | X, \theta)}{p(y | X, \nu)} \right)\\
								&= p_{\nu}(y | X, \theta)\; .
\end{align*}
The second line stems from the inequality $ t \geq 1 + \log t$.

$p_{\nu}(y | X, \theta) = p(y | X, \theta)$ is immediate when setting $\theta = \nu$ in Eq.~\ref{eq:p_nu}.
Deriving $p_{\nu}(y | X, \theta)$ with respect to $\theta$ yields
\begin{align*}
\frac{\partial p_{\nu}(y | X, \theta)}{\partial \theta} &= p(y | X, \nu)\frac{\partial \log p(y | X, \theta)}{\partial \theta}\\
&= \frac{p(y | X, \nu)}{p(y | X, \theta)}\frac{\partial p(y | X, \theta)}{\partial \theta} \; .
\end{align*}
Taking $\theta = \nu$ on both sides yields $\frac{\partial p_{\nu}(y | X, \theta)}{\partial \theta}\bigg|_{\nu=\theta} = \frac{\partial p(y | X, \theta)}{\partial \theta}$.
\end{proof}

Lemma~\ref{lemma:general_lower_bound} suggests that, if the current set of parameters is $\theta_t$, an appropriate upper bound on the probability that an example will be correctly classified is
\begin{align*}
L(\theta)   &= 1 - \frac{1}{N}\sum_i p(y_i | X_i, \theta)\\
            &\leq 1 - \frac{1}{N}\sum_i p(y_i | X_i, \theta_t)\left(1 + \log\frac{p(y_i | X_i, \theta)}{p(y_i | X_i, \theta_t)}\right)\\
            &= C - \frac{1}{N}\sum_i p(y_i | X_i, \theta_t)\log p(y_i | X_i, \theta)\; ,
\end{align*}
where $C$ is a constant independent of $\theta$. We shall denote
\begin{align}
L_{\theta_t}(\theta) &= -\frac{1}{N}\sum_i p(y_i | X_i, \theta_t)\log p(y_i | X_i, \theta) \; .
\label{eq:g_theta_t}
\end{align}

One possibility is to recompute the bound after every gradient step. This is exactly equivalent to directly minimizing $L$. Such a procedure is brittle. In particular, Eq.~\ref{eq:g_theta_t} indicates that, if an example is poorly classified early on, its gradient will be close to 0 and it will difficult to recover from this situation. Thus, we propose using Algorithm~\ref{alg:iter_supervised} for supervised learning:
\begin{algorithm}
\caption{Iterative supervised learning}
\label{alg:iter_supervised}
\KwData{A dataset $\mathcal{D}$ comprising of $(X_i, y_i)$ pairs, initial parameters $\theta_0$}
\KwResult{Final parameters $\theta_T$}
\For{t = 0 \KwTo T-1}
{$\theta_{t+1} = \argmin_{\theta} L_{\theta_t} = -\sum_i p(y_i | X_i, \theta_t) \log p(y_i | X_i, \theta)$}
\end{algorithm}
In regularly recomputing the bound, we ensure that it remains close to the quantity we are interested in and that we do not waste time optimizing a loose bound.

The idea of computing tighter bounds during optimization is not new. In particular, several authors used a CCCP-based~\citep{yuille2003concave} procedure to achieve tighter bounds for SVMs~\citep{xu2006robust,collobert2006trading,bottou2011nonconvex}. Though~\citet{collobert2006trading} show a small improvement of the test error, the primary goal was to reduce the number of support vectors to keep the testing time manageable. Also, the algorithm proposed by~\citet{bottou2011nonconvex} required the setting of an hyperparameter, $s$, which has a strong influence on the final solution (see Fig.~5 in their paper). Finally, we are not aware of similar ideas in the context of the logistic loss.

Additionally, our idea extends naturally to the case where $p$ is a complicated function of $\theta$ and not easily written as a sum of a convex and a concave function. This might lead to nonconvex inner optimizations but we believe that this can still yield lower classification error. A longer study in the case of deep networks is planned.

\subsection*{Regularization}
As this model further optimizes the training classification accuracy, regularization is often needed. The standard optimization procedure minimizes the following regularized objective:
\begin{align*}
\theta^*    &= \argmin_{\theta} -\sum_i \log p(y_i | X_i, \theta) + \lambda \Omega(\theta)\\
            &= \argmin_{\theta} -\sum_i \frac{1}{K}\log p(y_i | X_i, \theta) + \frac{\lambda}{K} \Omega(\theta) \; .
\end{align*}

Thus, we can view this as an upper bound of the following ``true'' objective:
\begin{align*}
\theta^*    &= \argmin_{\theta} -\sum_i p(y_i | X_i, \theta) + \frac{\lambda}{K} \Omega(\theta) \; ,
\end{align*}
which can then be optimized using Algorithm~\ref{alg:iter_supervised}.

\subsection*{Online learning}
Because of its iterative nature, Algorithm~\ref{alg:iter_supervised} is adapted to a batch setting. However, in many cases, we have access to a stream of data and we cannot recompute the importance weights on all the points. A natural way around this problem is to select a parameter vector $\theta$ and to use $\nu = \theta$ for the subsequent examples. One can see this as ``crystallizing'' the current solution as the value of $\nu$ chosen will affect all subsequent gradients.

\section{Experiments}
\label{sec:experiments}
We experimented the impact of using tighter bounds to the expected misclassification rate on several datasets, which will each be described in their own section. The experimental setup for all datasets was as follows. We first set aside part of the dataset to compose the test set. We then performed k-fold cross-validation, using a generalized linear model, on the remaining datapoints for different values of $T$, the number of times the importance weights were recomputed, and the $\ell_2$-regularizer $\lambda$. For each value of $T$, we then selected the set of hyperparameters ($\lambda$ and the number of iterations) which achieved the lowest validation classification error. We computed the test error for each of the $k$ models (one per fold) with these hyperparameters. This allowed us to get a confidence intervals on the test error, where the random variable is the training set but not the test set.

For a fair comparison, each internal optimization was run for $Z$ updates so that $ZT$ was constant. Each update was computed on a randomly chosen minibatch of 50 datapoints using the SAG algorithm~\citep{leroux2012stochastic}. Since we used a generalized linear model, each internal optimization was convex and thus had no optimization hyperparameter.

Fig.~\ref{fig:all_trains} presents the training classification errors on all the datasets.

\begin{figure}[!htb]
\begin{center}
\includegraphics[width=.48\textwidth]{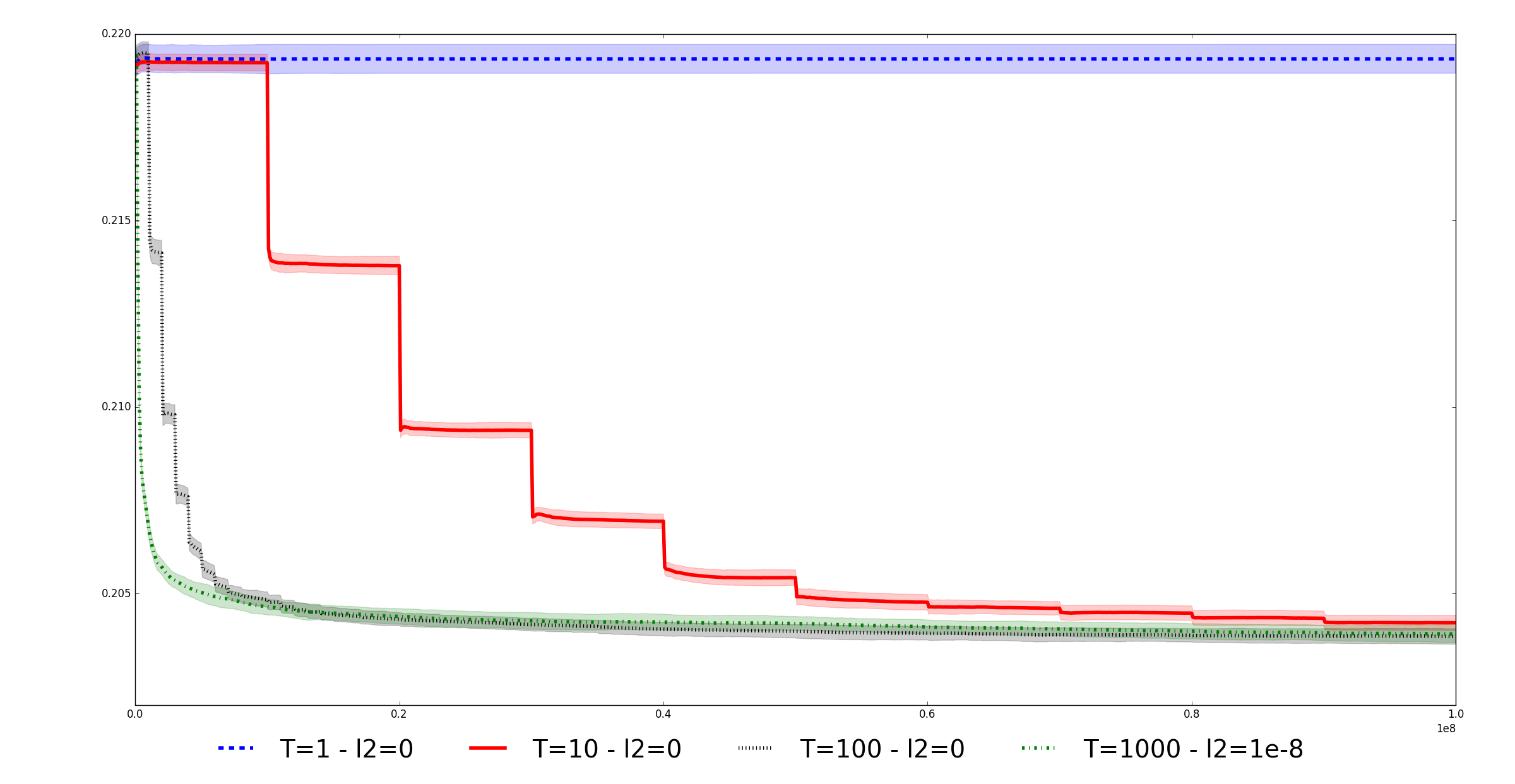}
\includegraphics[width=.48\textwidth]{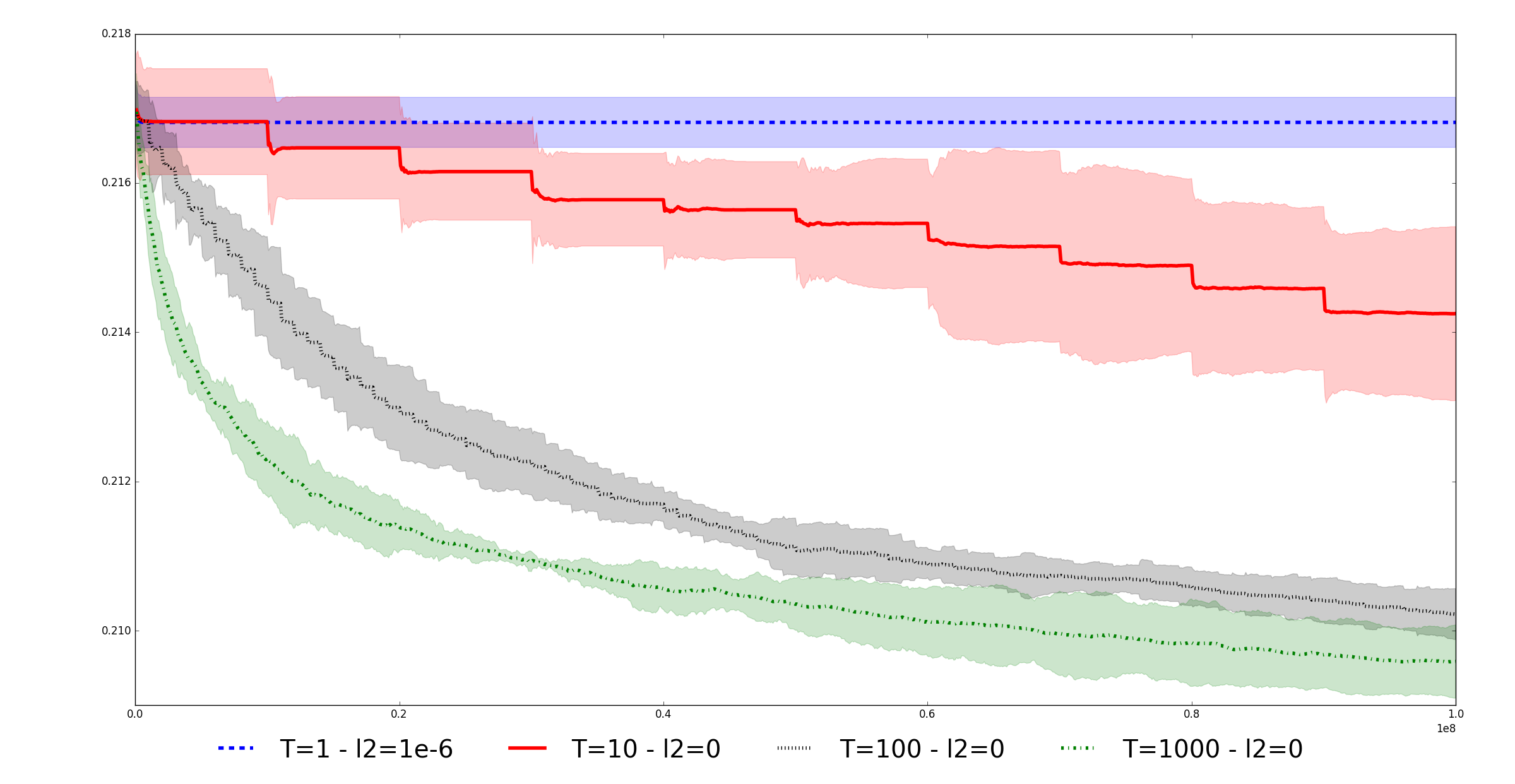}\\
\includegraphics[width=.48\textwidth]{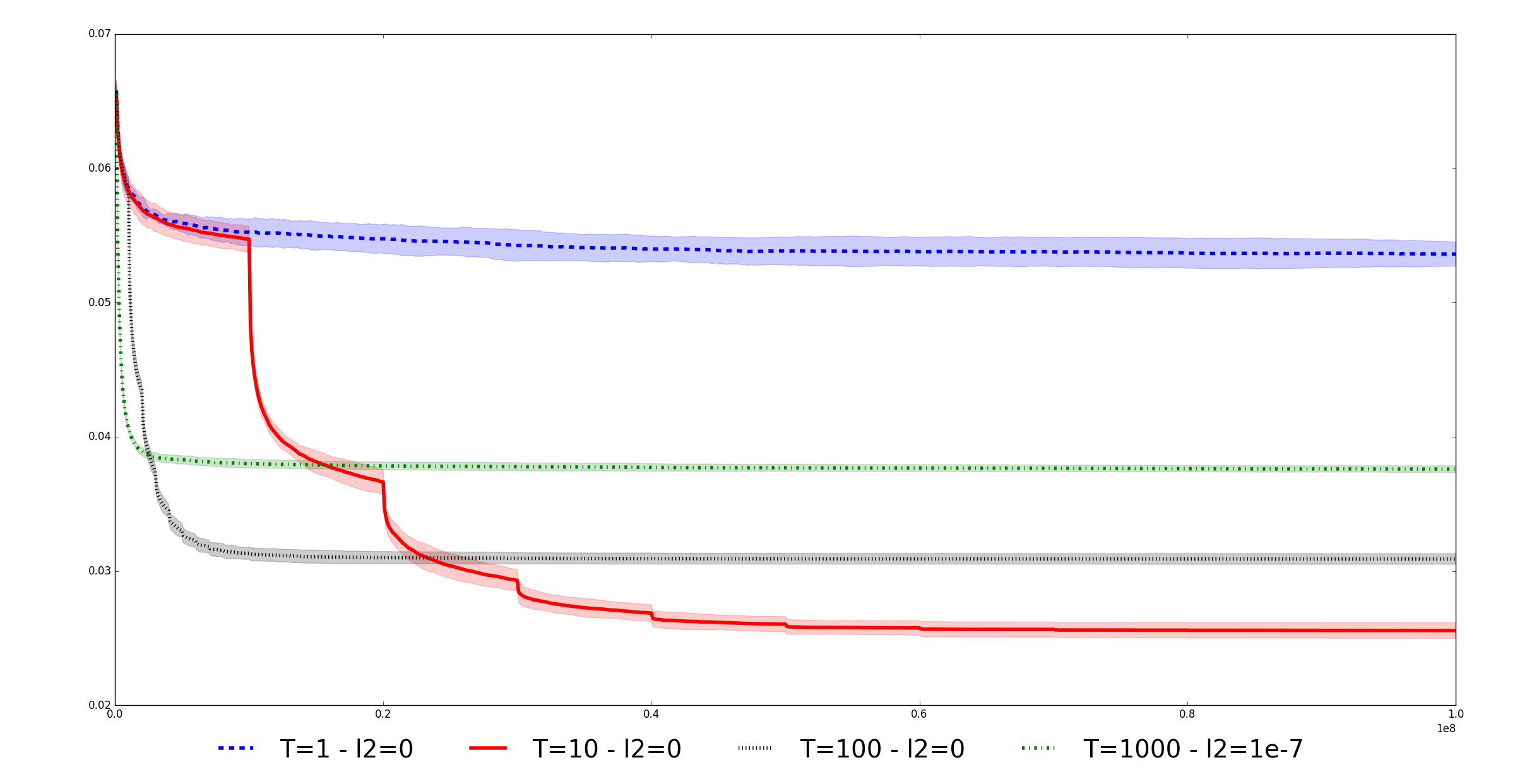}
\includegraphics[width=.48\textwidth]{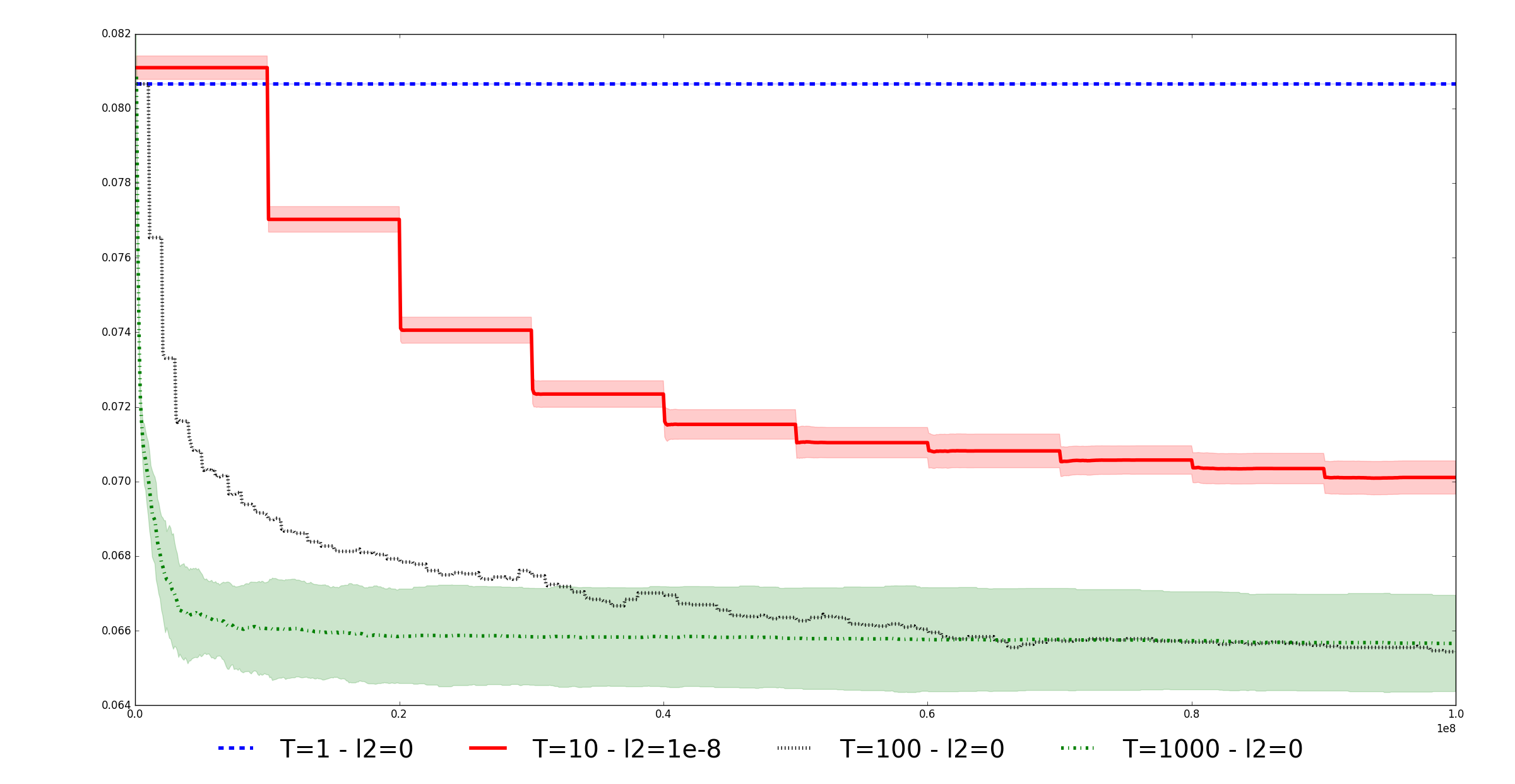}
\caption{Training classification errors for \emph{covertype} (\textbf{top left}), \emph{alpha} (\textbf{top right}), \emph{MNist} (\textbf{bottom left}) and \emph{IJCNN} (\textbf{bottom right}). We can immediately see that all values of $T > 1$ yield significant lower errors than the standard log-loss (the confidence intervals represent $\pm$ 3 standard deviations).
\label{fig:all_trains}}
\end{center}
\end{figure}

\subsection{Covertype binary dataset}
The Covertype binary dataset~\citep{collobert2002parallel} has 581012 datapoints in dimension 54 and 2 classes. We used the first 90\% for the cross-validation and the last 10\% for testing. Due to the small dimension of the input, linear models strongly underfit, a regime in which tighter bounds are most beneficial. We see in Fig.~\ref{fig:covtype_valid} that using $T > 1$ leads to much lower training and validation classification errors. Training and validation curves are presented in Fig.~\ref{fig:covtype_valid} and the test classification error is listed in Table~\ref{table:covtype_test}.

\begin{figure}[!htb]
\includegraphics[width=\textwidth]{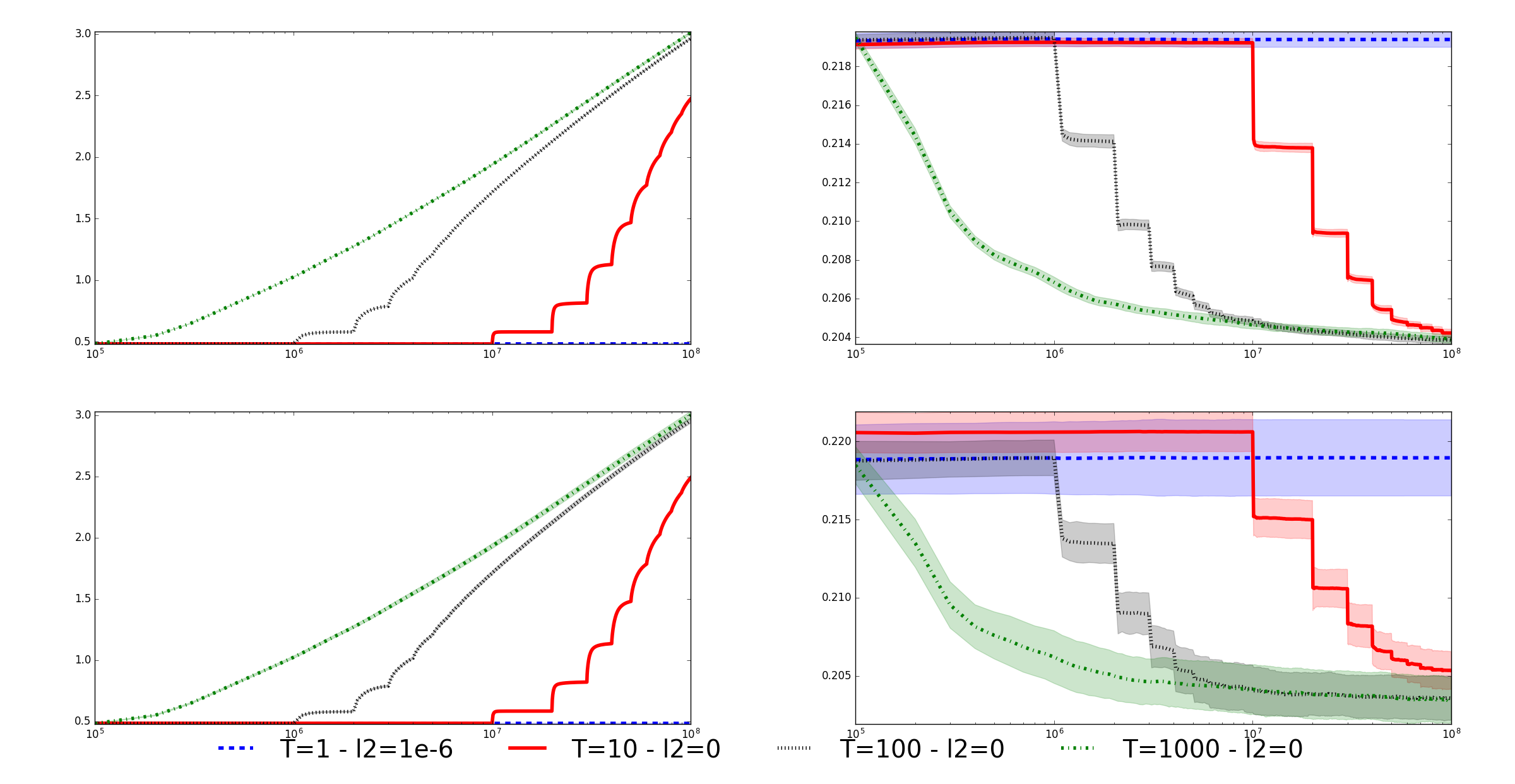}
\caption{Training (\textbf{top}) and validation (\textbf{bottom}) negative log-likelihood (\textbf{left}) and classification error (\textbf{right}) for the \emph{covertype} dataset. We only display the result for the value of $\lambda$ yielding the lowest validation error. As soon as the importance weights are recomputed, the NLL increases and the classification error decreases (the confidence intervals represent $\pm$ 3 standard deviations).
\label{fig:covtype_valid}
}
\end{figure}

\begin{table}[!htb]
\begin{center}
\begin{minipage}[c]{.35\textwidth}
\begin{tabular}{|c|c|c|}
\hline
T   &   $Z$ &Test error $\pm 3\sigma$ (\%)\\
\hline
1000    & 1e5       & $32.88 \pm 0.07$\\
100     & 1e6       & $32.96 \pm 0.06$\\
10      & 1e7       & $32.85 \pm 0.06$\\
1       & 1e8       & $36.32 \pm 0.06$\\
\hline
\end{tabular}
\end{minipage}\hfill
\begin{minipage}[c]{.6\textwidth}
\caption{Test error for the models reaching the best validation error for various values of $T$ on the \emph{covertype} dataset. We can see that any value of $T$ greater than 1 leads to a significant improvement over the standard log-loss (the confidence intervals represent $\pm$ 3 standard deviations).
\label{table:covtype_test}}
\end{minipage}
\end{center}
\end{table}

\subsection{Alpha dataset}
The Alpha dataset is a binary classification dataset used in the Pascal Large-Scale challenge and contains 500000 samples in dimension 500. We used the first 400000 examples for the cross-validation and the last 100000 for testing. A logistic regression trained on this dataset overfits quickly and, as a result, the results for all values of $T$ are equivalent. Training and validation curves are presented in Fig.~\ref{fig:alpha_valid} and the test classification error is listed in Table~\ref{table:alpha_test}.

\begin{figure}[!htb]
\includegraphics[width=\textwidth]{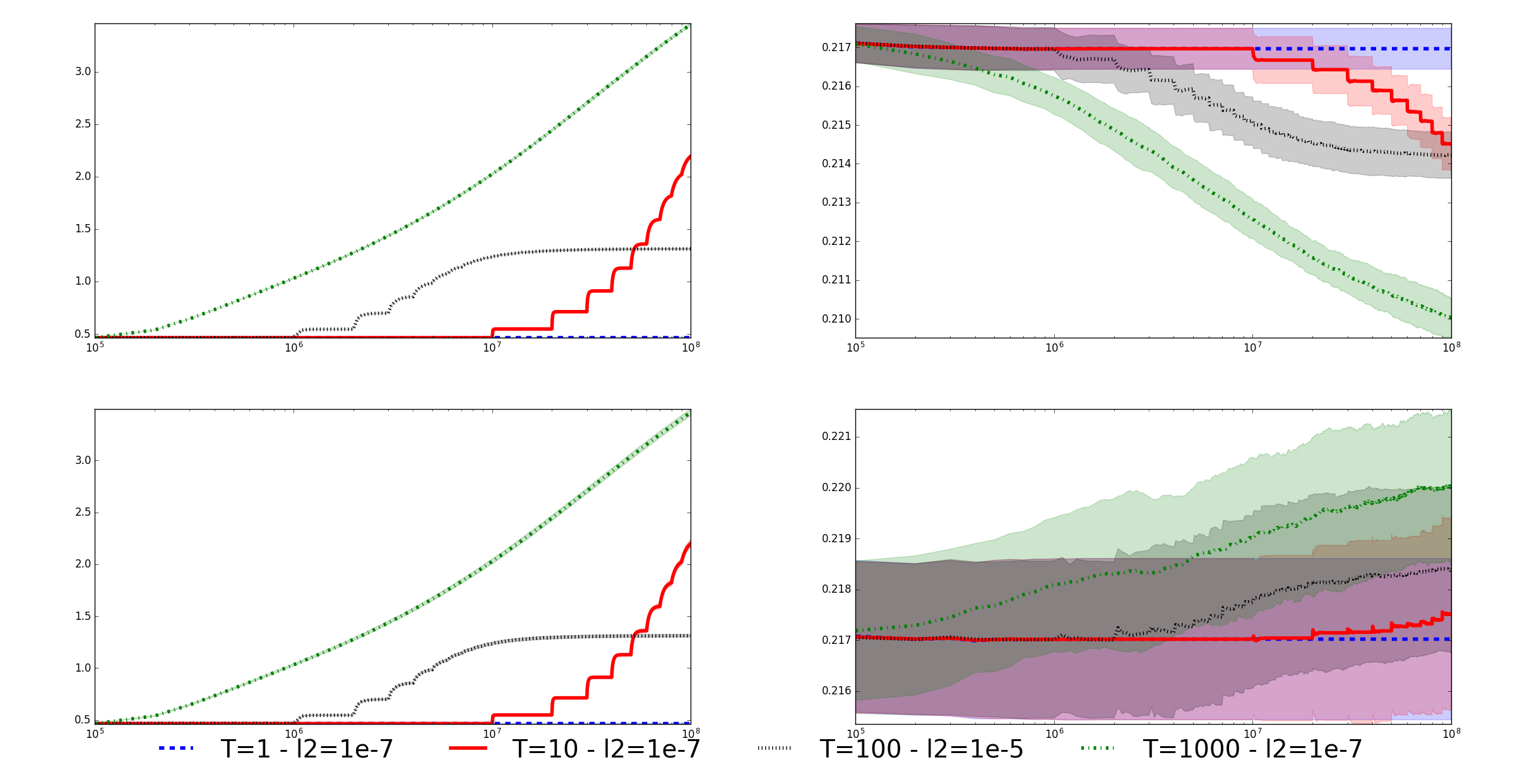}
\caption{Training (\textbf{top}) and validation (\textbf{bottom}) negative log-likelihood (\textbf{left}) and classification error (\textbf{right}) for the \emph{alpha} dataset. We only display the result for the value of $\lambda$ yielding the lowest validation error. As soon as the importance weights are recomputed, the NLL increases. Overfitting occurs very quickly and the best validation error is the same for all values of $T$ (the confidence intervals represent $\pm$ 3 standard deviations).
\label{fig:alpha_valid}
}
\end{figure}

\begin{table}[!htb]
\begin{center}
\begin{minipage}[c]{.35\textwidth}
\begin{tabular}{|c|c|c|}
\hline
T   &   $Z$ &Test error $\pm 3\sigma$ (\%)\\
\hline
1000    & 1e5       & $21.83 \pm 0.03$\\
100    & 1e6       & $21.83 \pm 0.03$\\
10     & 1e7       & $21.82 \pm 0.03$\\
1     & 1e8       & $21.82 \pm 0.03$\\
\hline
\end{tabular}
\end{minipage}\hfill
\begin{minipage}[c]{.6\textwidth}
\caption{Test error for the models reaching the best validation error for various values of $T$ on the \emph{alpha} dataset. We can see that overfitting occurs very quickly and, as a result, all values of $T$ lead to the same result as the standard log-loss.\label{table:alpha_test}}
\end{minipage}
\end{center}
\end{table}

\subsection{MNist dataset}
The MNist dataset is a digit recognition dataset with 70000 samples. The first 60000 were used for the cross-validation and the last 10000 for testing. Inputs have dimension 784 but 67 of them are always equal to 0. Despite overfitting occurring quickly, values of $T$ greater than 1 yield significant improvements over the log-loss. Training and validation curves are presented in Fig.~\ref{fig:mnist_valid} and the test classification error is listed in Table~\ref{table:mnist_test}.

\begin{figure}[!htb]
\includegraphics[width=\textwidth]{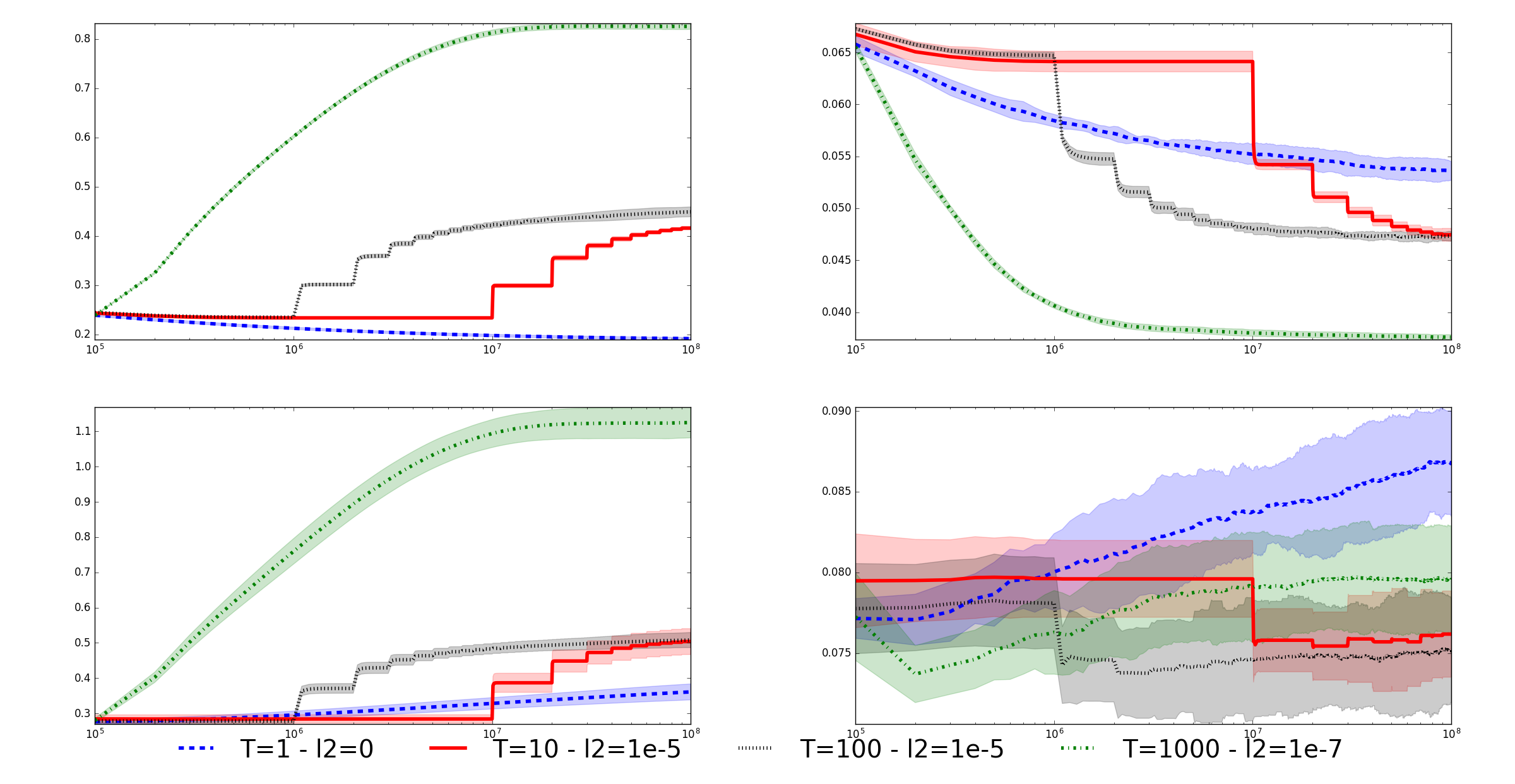}
\caption{Training (\textbf{top}) and validation (\textbf{bottom}) negative log-likelihood (\textbf{left}) and classification error (\textbf{right}) for the \emph{MNist} dataset. We only display the result for the value of $\lambda$ yielding the lowest validation error. As soon as the importance weights are recomputed, the NLL increases. Overfitting occurs quickly but higher values of $T$ still lead to lower validation error. The best training error was 2.52\% with $T=10$.
\label{fig:mnist_valid}
}
\end{figure}

\begin{table}[!htb]
\begin{center}
\begin{minipage}[c]{.35\textwidth}
\begin{tabular}{|c|c|c|}
\hline
T   &   $Z$ &Test error $\pm 3\sigma$ (\%)\\
\hline
1000    & 1e5       & $7.00 \pm 0.08$\\
100     & 1e6       & $7.01 \pm 0.05$\\
10      & 1e7       & $6.97 \pm 0.08$\\
1       & 1e8       & $7.46 \pm 0.11$\\
\hline
\end{tabular}
\end{minipage}\hfill
\begin{minipage}[c]{.6\textwidth}
\caption{Test error for the models reaching the best validation error for various values of $T$ on the \emph{MNist} dataset. The results for all values of $T$ strictly greater than 1 are comparable and significantly better than for $T=1$.\label{table:mnist_test}}
\end{minipage}
\end{center}
\end{table}

\subsection{IJCNN dataset}
The IJCNN dataset is a dataset with 191681 samples. The first 80\% of the dataset were used for training and validation (70\% for training, 10\% for validation, using random splits), and the last 20\% were used for testing samples. Inputs have dimension 23, which means we are likely to be in the underfitting regime. Indeed, larger values of $T$ lead to significant improvements over the log-loss. Training and validation curves are presented in Fig.~\ref{fig:ijcnn_valid} and the test classification error is listed in Table~\ref{table:ijcnn_test}.

\begin{figure}[!htb]
\includegraphics[width=\textwidth]{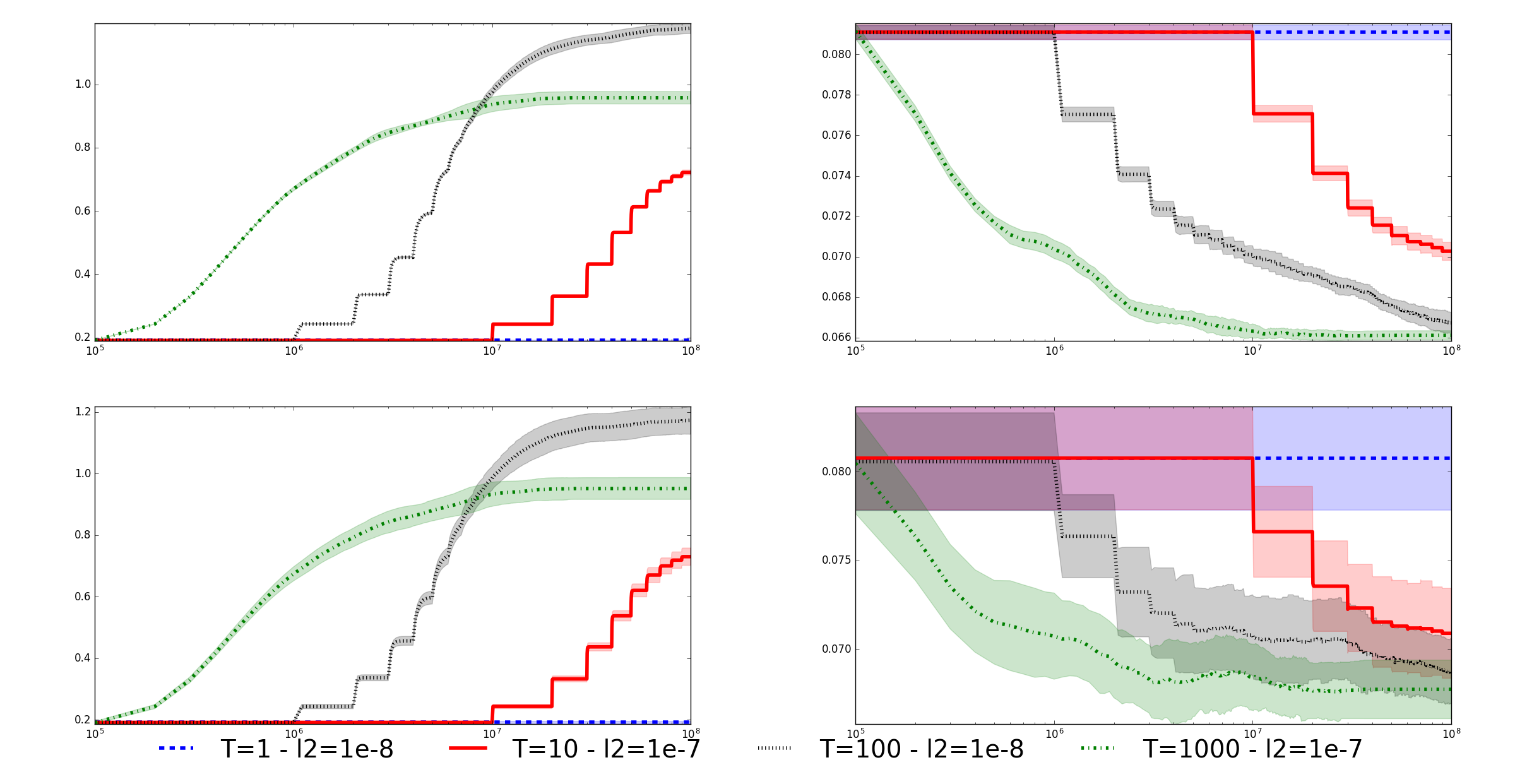}
\caption{Training (\textbf{top}) and validation (\textbf{bottom}) negative log-likelihood (\textbf{left}) and classification error (\textbf{right}) for the \emph{IJCNN} dataset. We only display the result for the value of $\lambda$ yielding the lowest validation error. As soon as the importance weights are recomputed, the NLL increases. Since the number of training samples is large compared to the dimension of the input, the standard logistic regression is underfitting and higher values of $T$ lead to better validation errors.
\label{fig:ijcnn_valid}
}
\end{figure}

\begin{table}[!htb]
\begin{center}
\begin{minipage}[c]{.35\textwidth}
\begin{tabular}{|c|c|c|}
\hline
T   &   $Z$ &Test error $\pm 3\sigma$ (\%)\\
\hline
1000    & 1e5       & $4.62 \pm 0.12$\\
100     & 1e6       & $5.26 \pm 0.33$\\
10      & 1e7       & $5.87 \pm 0.13$\\
1       & 1e8       & $6.19 \pm 0.12$\\
\hline
\end{tabular}
\end{minipage}\hfill
\begin{minipage}[c]{.6\textwidth}
\caption{Test error for the models reaching the best validation error for various values of $T$ on the \emph{IJCNN} dataset. Larger values of $T$ lead significantly lower test errors.\label{table:ijcnn_test}}
\end{minipage}
\end{center}
\end{table}

\section{Supervised learning as policy optimization}
\label{sec:rl}
We now propose an interpretation of supervised learning which closely matches that of direct policy optimization in reinforcement learning. This allows us to naturally address common issues in the literature, such as optimizing ROC curves or allowing a classifier to withhold taking a decision.

A machine learning algorithm is often only one component of a larger system whose role is to make decisions, whether it is choosing which ad to display or deciding if a patient needs a specific treatment. Some of these systems also involve humans. Such systems are complex to optimize and it is often appealing to split them into smaller components which are optimized independently. However, such splits might lead to poor decisions, even when each component is carefully optimized~\citep{bottou2015stakes}. This issue can be alleviated by making each component optimize the full system with respect to its own parameters. Doing so requires taking into account the reaction of the other components in the system to the changes made, which cannot in general be modeled. However, one may cast it as a reinforcement learning problem where the environment is represented by everything outside of our component, including the other components of the system~\citep{bottou2013counterfactual}.

Pushing the analogy further, we see that in one-step policy learning, we try to find a policy $p( y | X, \theta)$ over actions $y$ given the state $X$~\footnote{In standard policy learning, we actually consider full rollouts which include not only actions but also state changes due to these actions.} to minimize the expected loss defined as
\begin{align}
\bar{L}(\theta) &= -\sum_i \sum_y R(y, X_i) p(y | X_i, \theta) \; .
\label{eq:policy_learning}
\end{align}
$\bar{L}(\theta)$ is equivalent to $L(\theta)$ from Eq.~\ref{eq:g_supervised_prob} where all actions have a reward of 0 except for the action choosing the correct class $y_i$ yielding $R(y_i, X_i) = 1$.
One major difference between policy learning and supervised learning is that, in policy learning, we only observe the reward for the actions we have taken, while in supervised learning, the reward for all the actions is known.

Casting the classification problem as a specific policy learning problem yields a loss function commensurate with a reward. In particular, it allows us to explicit the rewards associated with each decision, which was difficult with Eq.~\ref{eq:log_loss}. We will now review several possibilities opened by this formulation.

\subsection*{Optimizing the ROC curve}
In some scenarios, we might be interested in other performance metrics than the average classification error. In search advertising, for instance, we are often interested in maximizing the precision at a given recall.~\citet{mozer2001prodding} address the problem by emphasizing the training points whose output is within a certain interval.~\citet{gasso2011batch,parambath2014optimizing}, on the other hand, assign a different cost to type I and type II errors, learning which values lead to the desired false positive rate. Finally,~\citet{bach2006considering} propose a procedure to find the optimal solution for all costs efficiently in the context of SVMs and showed that the resulting models are not the optimal models in the class.

To test the impact of optimizing the probabilities rather than a surrogate loss, we reproduced the binary problem of~\citet{bach2006considering}. We computed the average training and testing performance over 10 splits. An example of the training set and the results are presented in Fig.~\ref{fig:cost_asymmetry}.

\begin{figure}[!htb]
\begin{center}
\includegraphics[width=.47\textwidth]{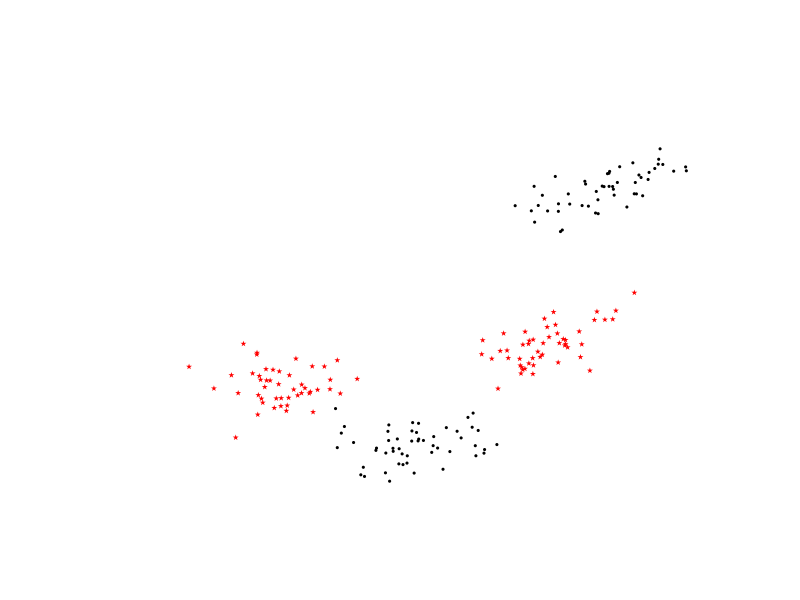}\hfill
\includegraphics[width=.47\textwidth]{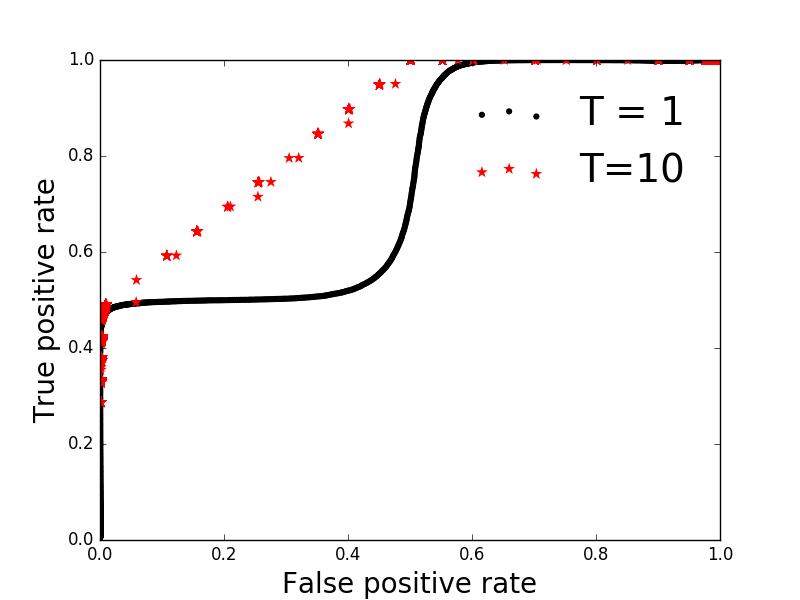}
\caption{Training data (\textbf{left}) and test ROC curve (\textbf{right}) for the binary classification problem from~\citet{bach2006considering}. The black dots are obtained when minimizing the log-loss for various values of the cost asymmetry. The red stars correspond to the ROC curve obtained when directly optimizing the probabilities. While the former is not concave, a problem already mentioned by~\citet{bach2006considering}, the latter is.
\label{fig:cost_asymmetry}}
\end{center}
\end{figure}

Even though working directly with probabilities solved the non-concavity issue, we still had to explore all possible cost asymmetries to draw this curve. In particular, if we had been asked to maximize the true positive rate for a given false positive rate, we would have needed to draw the whole curve then find the appropriate point.

However, expressing the loss directly as a function of the probabilities of choosing each class allows us to cast this requirement as a constraint and solve the following constrained optimization problem:
\begin{align*}
\theta^* &= \argmin_{\theta} -\frac{1}{N_1}\sum_{i / y_i = 1} p(1 | x_i, \theta) \textrm{ such that } \frac{1}{N_0}\sum_{i/ y_i = 0} p(1 | x_i, \theta) \leq c_{FP} \; ,
\end{align*}
with $N_0$ (resp. $N_1$) the number of examples belonging to class $0$ (resp. class $1$).
Since $p(1 | x_i, \theta) = 1 - p(0 | x_i, \theta)$ , we can solve the following Lagrangian problem
\begin{align*}
\min_{\theta} \max_{\lambda \geq 0} L(\theta, \lambda) &= \min_{\theta} \max_{\lambda \geq 0} \frac{1}{N_1}\sum_{i / y_i = 1} p(1 | x_i, \theta) + \lambda \left(1 - \frac{1}{N_0}\sum_{i / y_i = 0} p(0 | x_i, \theta) - c_{FP}\right) \; .
\end{align*}
This is an approach proposed by~\citet{mozer2001prodding} who then minimize this function directly. We can however replace $L(\theta, \lambda)$ with the following upper bound:
\begin{align*}
L(\theta, \lambda) &\leq \frac{1}{N_1}\sum_{i / y_i = 1} p(1 | x_i, \nu)\left(1 + \log \frac{p(1 | x_i, \theta)}{p(1 | x_i, \nu)}\right)\\
 &\quad + \lambda \left(1 - \frac{1}{N_0}\sum_{i / y_i = 0} p(0 | x_i, \nu)\left(1 + \log \frac{p(0 | x_i, \theta)}{p(0 | x_i, \nu)}\right) - c_{FP}\right)
\end{align*}
and jointly optimize over $\theta$ and $\lambda$. Even though the constraint is on the upper bound and thus will not be exactly satisfied during the optimization, the increasing tightness of the bound with the convergence will lead to a satisfied constraint at the end of the optimization. We show in Fig.~\ref{fig:test_fpr} the obtained false positive rate as a function of the required false positive rate and see that the constraint is close to being perfectly satisfied. One must note, however, that the ROC curve obtained using the constrained optimization problems matches that of $T=1$, i.e. is not concave. We do not have an explanation as to why the behaviour is not the same when solving the constrained optimization problem and when optimizing an asymmetric cost for all values of the asymmetry.

\begin{figure}[!htb]
\begin{center}
\begin{minipage}[c]{.5\textwidth}
\includegraphics[width=\textwidth]{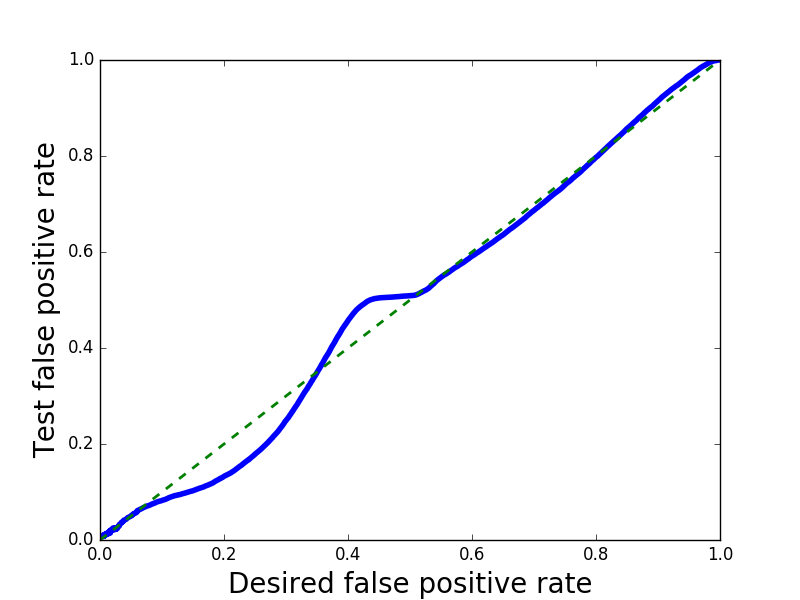}
\end{minipage}\hfill
\begin{minipage}[c]{.48\textwidth}
\caption{Test false positive rate as a function of the desired false positive rate $c_{FP}$. The dotted line representing the optimal behaviour, we can see that the constraint is close to being satisfied. $T=10$ was used.
\label{fig:test_fpr}}
\end{minipage}
\end{center}
\end{figure}

\subsection*{Allowing uncertainty in the decision}
Let us consider a cancer detection algorithm which would automatically classify patients in two categories: healthy or ill. In practice, this algorithm will not be completely accurate and, given the high price of a misclassification, we would like to include the possibility for the algorithm to hand over the decision to the practitioner. In other words, it needs to include the possibility of being ``Undecided''.

The standard way of handling this situation is to manually set a threshold on the output of the classifier and, should the maximum score across all classes be below that threshold, deem the example too hard to classify. However, it is generally not obvious how to set the value of that threshold nor how it relates to the quantity we care about. The difficulty is heightened when the prior probabilities of each class are very different.

Eq.~\ref{eq:policy_learning} allows us to naturally include an extra ``action'', the ``Undecided'' action, which has its own reward. This reward should be equal to the reward of choosing the correct class (i.e., 1) minus the cost $c_h$ of resorting to external intervention~\footnote{This is assuming that the external intervention always leads to the correct decision. Any other setting can easily be used.}, which is less than 1 since we would otherwise rather have an error than be undecided. Let us denote by $r_h = 1 - c_h$ the reward obtained when the model chooses the ``Undecided'' class. Then, the reward obtained when the input is $X_i$ is:
\begin{align*}
R(y_i | X_i) &= 1\\
R(``Undecided'' | X_i) &= r_h \; ,
\end{align*}
and the average under the policy is $p(y_i | X_i, \theta) + r_h p(``Undecided'' | X_i, \theta)$.

Learning this model on a training set is equivalent to minimizing the following quantity:
\begin{align}
\theta^* &= \argmin_{\theta} -\frac{1}{N}\sum_i \left(p(y_i | X_i, \theta) + r_h p(\textrm{``Undecided''} | X_i, \theta)\right) \; .
\end{align}

For each training example, we have added another example with importance weight $r_h$ and class ``Undecided''. If we were to solve this problem through a minimization of the log-loss, it is well-known that the optimal solution would be, for each example $X_i$, to predict $y_i$ with probability $1 / (1 + r_h)$ and ``Undecided'' with probability $r_h / (1 + r_h)$. However, when optimizing the weighted sum of probabilities, the optimal solution is still to predict $y_i$ with probability 1. In other words, adding the ``Undecided'' class does not change the model if it has enough capacity to learn the training set accurately.

\section{Discussion and conclusion}
\label{sec:conclusion}
Using a general class of upper bounds of the expected classification error, we showed how a sequence of minimizations could lead to reduced classification error rates. However, there are still a lot of questions to be answered. As using $T > 1$ increases overfitting, one might wonder whether the standard regularizers are still adapted. Also, current state-of-the-art models, especially in image classification, already use strong regularizers such as dropout. The question remains whether using $T > 1$ with these models would lead to an improvement.

Additionally, it makes less and less sense to think of machine learning models in isolation. They are increasingly often part of large systems and one must think of the proper way of optimizing them in this setting. The modification proposed here led to an explicit formulation for the true impact of a classifier. This facilitates the optimization of such a classifier in the context of a larger production system where additional costs and constraints may be readily incorporated. We believe this is a critical venue of research to be explored further.

\subsubsection*{Acknowledgments}
We thank Francis Bach, L\'eon Bottou, Guillaume Obozinski, and Vianney Perchet for helpful discussions.

\bibliography{../../../latex/full}

\end{document}